\let\proof\relax                
\let\endproof\relax
\let\NAT@parse\undefined
\newtheorem{lemma}{Lemma}
\let\emptyset\varnothing
\theoremstyle{definition}
\theoremstyle{plain}
\newtheorem{thm}{Theorem} 
\theoremstyle{definition}
\newtheorem{definition}[thm]{Definition}
\newcounter{tecounter}
\providecommand{\gobble}[1]{}
\providecommand{\Language}[1]{\ensuremath{\mathcal{L}(#1)}}
\newcommand*{\probleminternal}[4]{
	\par
	\medskip
	\noindent\fbox{\parbox{0.98\columnwidth}{
		\textbf{#4:} {#1} \\[0.05in]
		\renewcommand{\tabcolsep}{2pt}
		\begin{tabularx}{\linewidth}{rX}
			\emph{Input:} & #2 \\
			\emph{Output:} & #3
		\end{tabularx}
	}}
	\par
	\medskip
	\par
}
\let\emptyset\varnothing
\newcommand*{\ourproblem}[3]{\probleminternal{#1}{#2}{#3}{Problem}}
\providecommand{\defemp}[1]{\emph{#1}} 
\newcommand{\fm}{{\sc fm}\xspace}
\providecommand{\reachedvf}[3]{\ensuremath{\mathcal{V}_{#2}(#1, #3)}}
\providecommand{\reachedf}[2]{\ensuremath{\mathcal{V}_{#1}(#2)}}
\providecommand{\reachedv}[2]{\ensuremath{\mathcal{V}(#1, #2)}}
\providecommand{\reachedc}[2]{\ensuremath{\mathcal{C}(#1, #2)}}
\providecommand{\reaching}[2]{\ensuremath{\mathcal{S}^{#1}_{#2}}}
\providecommand{\negation}[1]{\ensuremath{\overline{#1}}}
\providecommand{\bland}{\ensuremath{\bigwedge}}
\providecommand{\blor}{\ensuremath{\bigvee}}
\providecommand{\IP}{{ILP}\xspace}
\providecommand{\SAT}{{SAT}\xspace}
\providecommand{\INP}{{INP}\xspace}
\providecommand{\LazySAT}{{LazySAT}\xspace}
\newenvironment{sproof}{%
  \proof}{\endproof}
\providecommand{\vdagger}{v^{\dagger}}
\providecommand{\zipped}{zipped\xspace}
\providecommand{\Zipped}{Zipped\xspace}
\providecommand{\Zip}{Zip\xspace}
\def\BibTeX{{\rm B\kern-.05em{\sc i\kern-.025em b}\kern-.08em
    T\kern-.1667em\lower.7ex\hbox{E}\kern-.125emX}}
\begin{document}

\title{\LARGE \bf 
Accelerating combinatorial filter reduction through constraints
}

\newcommand\unsure[1]{\textcolor{brown}{{#1}}}
\newcommand\oldtext[1]{}

\author{
Yulin Zhang,$^1$ Hazhar Rahmani,$^2$ Dylan A. Shell,$^1$ Jason M. O'Kane$^2$\vspace*{-5pt}
\thanks{$^1$Yulin Zhang and Dylan A. Shell are with the Dept. of Computer Science and Engineering, Texas A\&M University, College Station, TX, USA.  {\tt\small yulinzhang@tamu.edu, dshell@tamu.edu}}
\thanks{$^2$Hazhar Rahmani and Jason M. O'Kane are with the Dept. of Computer Science and Engineering, University of South Carolina, Columbia, SC, USA. {\tt\small hrahmani@email.sc.edu,jokane@cse.sc.edu}}
}

\maketitle
\begin{abstract}
Reduction of combinatorial filters involves compressing state representations
that robots use. Such optimization arises in automating the construction of
minimalist robots. But exact combinatorial filter reduction is an NP-complete
problem and all current techniques are either inexact or formalized with
exponentially many constraints. 
This paper proposes a new formalization needing only a polynomial number of
constraints, and characterizes these constraints in three different forms:
nonlinear, linear, and conjunctive normal form. Empirical results show that 
constraints in conjunctive normal form capture the
problem most effectively, leading to a method that outperforms the others. 
Further examination indicates that a substantial proportion of constraints
remain inactive during iterative filter reduction. To leverage this
observation, we introduce just-in-time
generation of such constraints, which yields improvements in 
efficiency and has the potential to minimize large filters.
%
\end{abstract}


\section{Introduction}
\label{sec:intr}
A growing body of work has described tools, employed optimization methods, or
proposed new algorithms to help automate the design and/or fabrication of robots
(e.g.,~\citep{Luck-RSS-17,Schulz17,HoovFearing08,pervan2018low,zhang2020abstractions}).  Important
among those approaches are algorithms that aim to manage or minimize resources
(for instance, see~\citep{censi17co}).  Memory is one resource of
particular interest to us, not because RAM is expensive, but rather because 
when state requirements are reduced this often conveys insight into the
fundamental informational structure of particular robot
tasks~(cf.~\citep{connell90,donald95information}). To this end, we focus on
filter reduction, targeting combinatorial filters of the style promoted by
LaValle~\citep{lavalle10sensing}.  These filters are discrete variants of the
probabilistic estimators ubiquitous in modern robotics, and they yield
particularly elegant treatments for certain practical tasks (e.g, see
\citep{tovar2014combinatorial}).  The minimization problem for combinatorial filters is
simple to state and easy to grasp, but continued work on the
topic~\citep{okane2013reduction,o2017concise,saberifar2017combinatorial,rahmani2018relationship,zhang20cover,rahmani2020integer}
shows that it involves more than first meets the eye. 

As a simple example to motivate filter minimization,
consider the safari park with vehicle rental service shown in
Figure~\ref{fig:motivating_scenario}. 
The cars for hire are each equipped with a compass and an intelligent gear shifting system. The compass measures the heading of the vehicle before and after its movement, e.g., `nw' means that the vehicle was heading north and then turned to face west. The intelligent gear shifting system takes the readings from the compass as input, and automatically shifts gears to abide with the speed limit.
There are three types of speed limits on roads: ($a$) between $15$ and $30$ (gray), ($b$) speeds below $30$ (brown), or ($c$) slower than $15$ (green). 
Every vehicle is capable of moving with a low gear to drive with a maximum speed $15$, and with a high gear to drive between speed $15$ and $30$.
A na\"ive gear shifting system satisfying the speed limits is realized by a filter shown in Figure~\ref{fig:motivating_input}: each vertex represents a system state, each edge represents the state transition, with the label on the edges representing the readings from the compass. 
A vertex is colored gray if the system outputs high gear, colored green if it outputs low gear, or colored both colors if the system may use either gear (via, say, a nondeterministic choice).
We are interested in finding a minimal filter, like the one shown in Figure~\ref{fig:motivating_minimal}, that realizes appropriate behavior but with fewest states.
\begin{figure}[t] 
\begin{subfigure}[b]{\linewidth}
\centering
\includegraphics[scale=0.3]{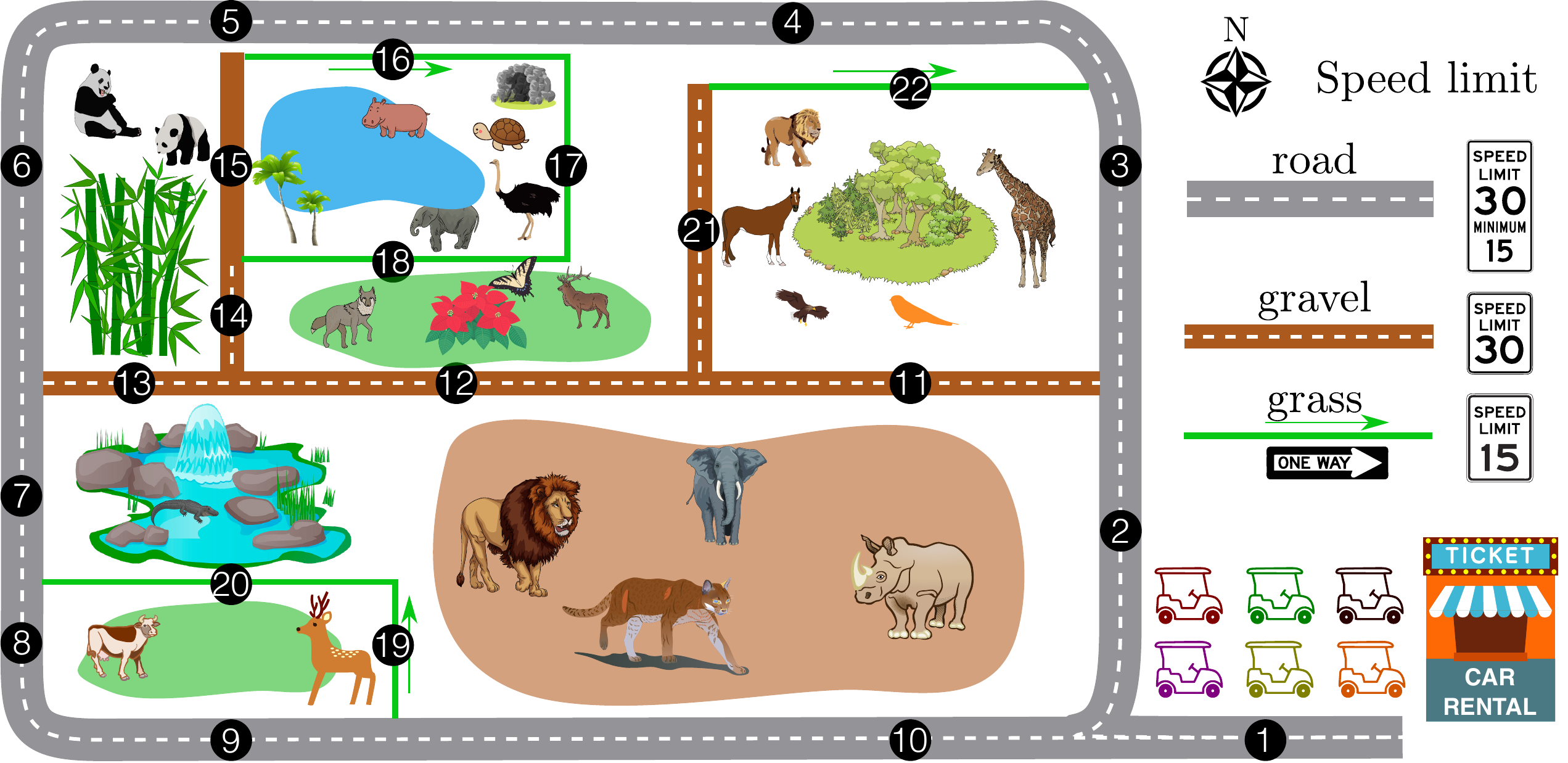}
\caption{\label{fig:motivating_scenario}}
\end{subfigure}
\begin{subfigure}[b]{0.58\linewidth}
\includegraphics[scale=0.23]{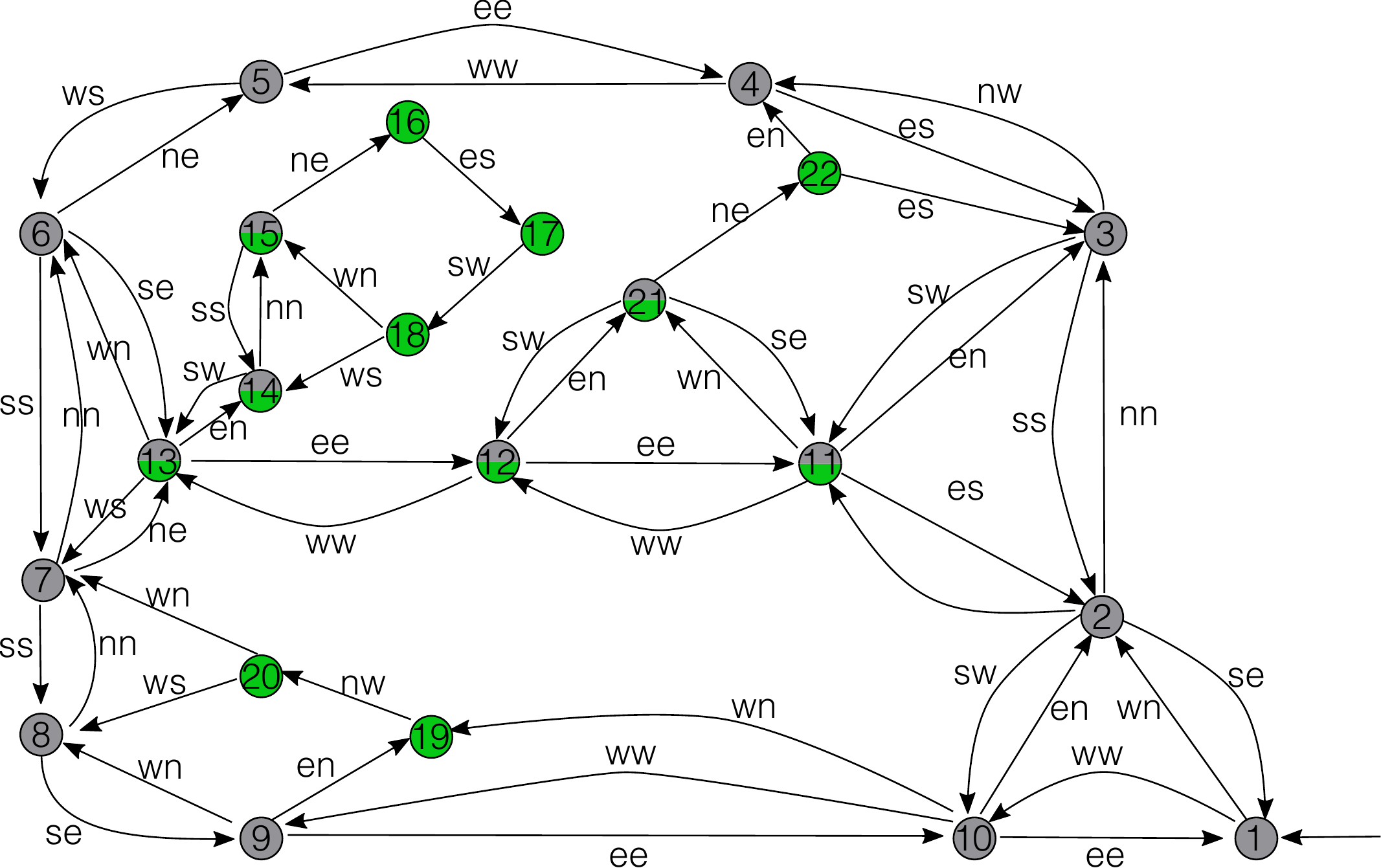}
\caption{\label{fig:motivating_input}} 
\end{subfigure}
\begin{subfigure}[b]{0.37\linewidth}
\includegraphics[scale=0.25]{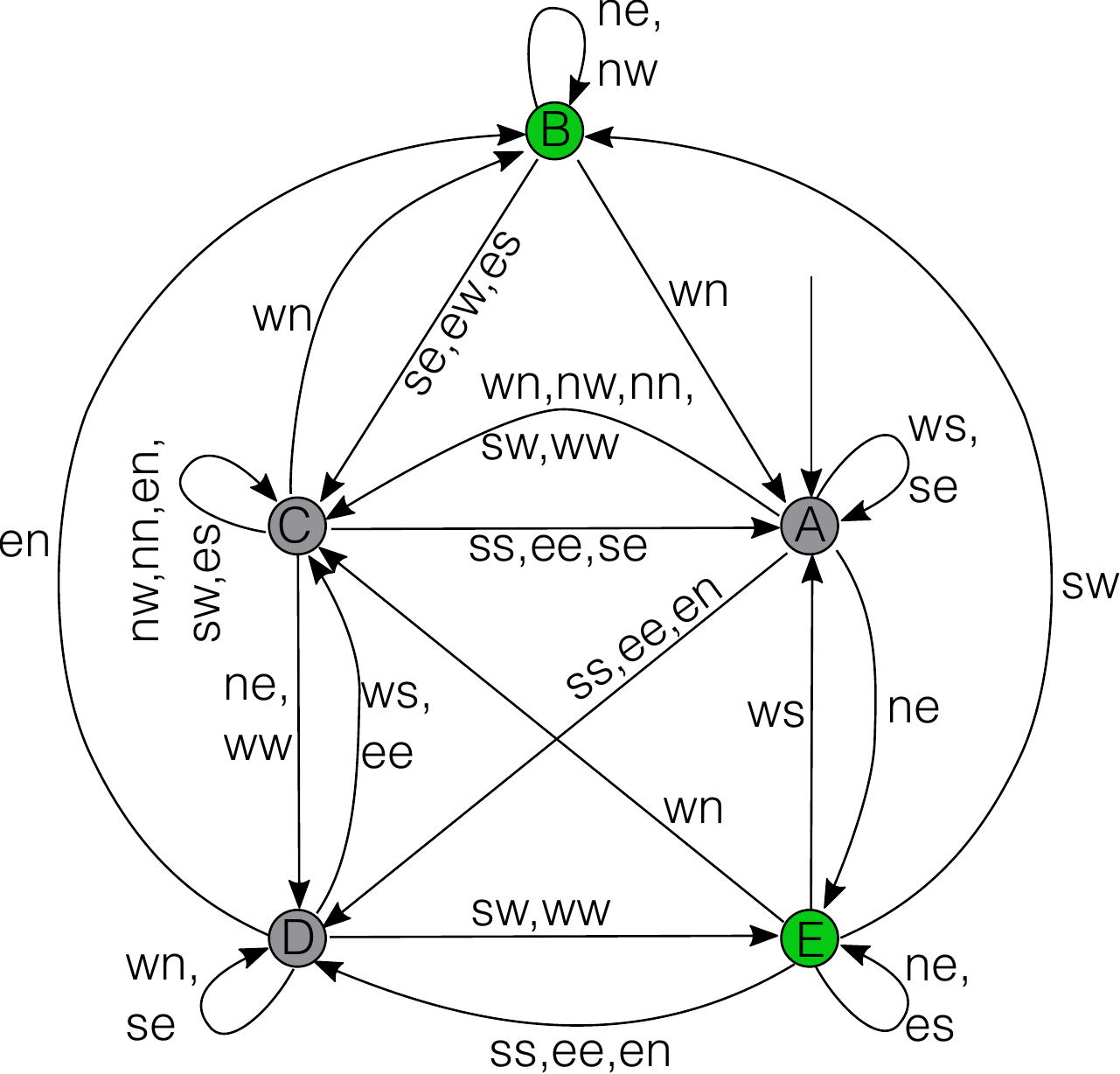}
\caption{\label{fig:motivating_minimal}} 
\end{subfigure}
\caption{(a) A safari park with vehicles for rental. The vehicles are
equipped with an intelligent gear shifting system automatically shifts the gear to satisfy the speed limit according to its compass readings. (b) A na\"ive filter to implement the intelligent gear shifting system. (c) A minimal filter for the intelligent gear shifting system.
\label{fig:motivating_excample}}
\vspace{-1cm}
\end{figure}

A natural way to proceed is by first constructing a discrete state-transition
system (such as in Figure~\ref{fig:motivating_input}) using the problem
description as a basis. Then, the next step is to apply some
algorithm capable of compressing it. 
Despite the apparent similarity of combinatorial
filter minimization to the problem of state minimization of deterministic
automata, with Myhill--Nerode's famous and efficient
reduction~\citep{hopcroft06introduction}, minimization of combinatorial filters
is NP-complete~\citep{okane2013reduction}. 
%
One thread of work studies filter minimization based on merger operations. 
These algorithms reduce filter minimization to a graph coloring instance~\citep{okane2013reduction,o2017concise,saberifar2017combinatorial} or integer linear programming~\citep{rahmani2020integer}. 
Saberifar et. al.~\citep{saberifar2017combinatorial} examined special cases, approximation and parameterized complexity of filter minimization. 
Rahmani and O'Kane~\citep{rahmani2018relationship} showed that the well-known notion of \emph{bisimulation} relation in general yields only sub-optimal solutions.
They proposed three different integer linear programming formulations to search for the smallest equivalence relation~\citep{rahmani2020integer}. 
Recent work shows that both merge and split operations are necessary to find a
minimal filter~\citep{zhang20cover}. It formalizes the problem as a clique
cover problem subject to the constraint that the output filter should be
deterministic and must simulate the output of the input filter. But both
criteria involve constraints that are exponential in size, which is obviously
computationally unattractive.

In this paper, we propose a new, competing formulation of the filter minimization problem with only a polynomial number of constraints and which yields a concise nonlinear integer programming formulation. 
Since employing nonlinear constraints is generally computationally inefficient,
we `flatten' these nonlinear constraints leading to (1) an integer linear
programming and (2) a Boolean satisfaction formalism---both involve expressions
of constraints that are of comparable size.
\gobble{But, unlike the one-shot optimization scheme used by both linear and nonlinear
programming, Boolean satisfaction solves the problem by iteratively enumerating
the integer values of the objective. Since the domain of the objective for
filter minimization is small and the constraints in conjunctive normal form are
relatively efficient to be exploited, Boolean satisfaction outperforms all
other approaches in the experiment.}
Looking at the experimental results in depth, we observed that many constraints
remain inactive during the iterative search for a minimal filter.  
To speed computation within the Boolean
satisfaction formalism,
we treat these constraints just-in-time: only introducing a constraint when
we detect that some proposed variable assignment would violate it.  Empirical
results show that the speedup from this treatment can outweigh the
overhead of detecting and dynamically introducing the constraints. 

After presenting the problem statement in Section~\ref{sec:prob}, we first
formalize filter minimization as an integer nonlinear program in
Section~\ref{sec:INP}.  Based on that, we give integer linear
programming and Boolean
satisfaction formulations in Section~\ref{sec:ILP_SAT}. Experimental results
appear in Section~\ref{sec:expr}. 

\section{Problem Description}
\label{sec:prob}
\subsection{Combinatorial filters and their minimization}
We firstly recall the notion of a combinatorial filter:
%
%
\begin{definition}[procrustean filter~\citep{setlabelrss}]
A \defemp{procrustean filter}, \defemp{p-filter} or \defemp{filter} for short,
is a 6-tuple $(V, V_0, Y, \tau, C, c)$ in which $V$ is a non-empty finite set of states, $V_0$ is the set of initial states, $Y$ is the set of observations,
$\tau: V\times V\rightarrow 2^Y$ is the transition function, $C$ is the set of outputs (colors), and $c: V\to 2^{C}\setminus\{\emptyset\}$ is the output function.
\end{definition}

The sets of states, initial states, and observations for filter $F$ are denoted 
$V(F)$, $V_0(F)$ and $Y(F)$, respectively. Without loss of generality, we treat a
filter as a graph with states as its vertices and transitions as directed edges.

Given a filter $F=(V, V_0, Y, \tau, C, c)$, an observation sequence (or a
string) $s=y_1y_2\dots y_n\in Y^{*}$, and states $v, w \in V$, we say that $w$ is
\defemp{reached by} $s$ (or $s$ \defemp{reaches} $w$) when traced from $v$, if there exists a sequence of states $w_0,w_1,
\dots, w_{n}$ in $F$, such that $w_0 = v$, $w_n = w$, and $\forall i\in \lbrace 1, 2, \dots, n\rbrace,
y_i\in \tau(w_{i-1}, w_i)$.
We denote the set of all states reached by $s$ from a state $v$ in
$F$ with $\reachedvf{v}{F}{s}$ (also called $s$-children of $v$), and denote all states reached by $s$ from any initial state of the filter with $\reachedf{F}{s}$, 
i.e., $\reachedf{F}{s} = \bigcup_{v_0 \in V_0} \reachedvf{v_0}{F}{s}$.
If $\reachedvf{v}{F}{s}=\emptyset$, then we say that string $s$ \emph{crashes} in $F$ starting from $v$.
%
We also denote the set of all strings reaching $w$ from some initial state in $F$ as
$\reaching{F}{w}=\{s\in Y^{*}| w\in\reachedf{F}{s}\}$. 
The set of all strings that do not crash in $F$ is called
the \defemp{interaction language} (or, briefly, just \defemp{language}) of $F$,
and is written as $\Language{F}=\{s\in Y^{*}| \reachedf{F}{s}\neq\emptyset\}$.
We also use $\reachedc{F}{s}$ to denote the set of outputs for all states reached in $F$ by $s$, i.e., $\reachedc{F}{s}=\cup_{v\in \reachedv{F}{s}} c(v)$. For empty string $\epsilon$, we have $\reachedc{F}{\epsilon}=\cup_{v_0\in V_0} c(v_0)$. 

\gobble{
\begin{definition}[filter output] Given any filter $F=(V, V_0, Y, \tau, C, c)$, a string $s$ and an output $o\in C$, we say that $o$ is a \defemp{filter output} with input string $s$, if $o$ is an
output from the state reached by $s$, i.e., $o\in \cup_{v\in \reachedv{F}{s}} c(v)$.
We denote the set of all filter outputs for string $s$ as
$\reachedc{F}{s}=\cup_{v\in \reachedv{F}{s}} c(v)$.
\end{definition}

Specifically, for the empty string $\epsilon$, we have
$\reachedc{F}{\epsilon}=\cup_{v_0\in V_0(F)} c(v_0)$. 
}

\begin{definition}[output simulating]
Let $F$ and $F'$ be two filters, then $F'$ \defemp{output simulates} $F$ if
$\forall s\in \Language{F}$, $\reachedc{F'}{s}\neq \emptyset$ and
$\reachedc{F'}{s}\subseteq \reachedc{F}{s}$.
\end{definition}

Informally, one filter output simulates another if it admits all strings from the other filter and does not generate any new outputs for each of those strings.

We focus on filters with deterministic behavior:
\begin{definition}[deterministic]
A filter $F=(V, V_0, Y, \tau, C, c)$ is \defemp{deterministic} or
\defemp{state-determined}, if $|V_0|=1$, and for every $v_1, v_2, v_3\in V$ with
$v_2\neq v_3$, $\tau(v_1, v_2)\cap \tau(v_1, v_3)=\emptyset$. Otherwise, we say
that the filter is \defemp{non-deterministic}.
\end{definition}
Algorithm~$2$ in \citep{saberifar18pgraph} can be used to make any non-deterministic filter into a deterministic one. 
Then the filter minimization problem can be formalized as follows:
\ourproblem{\textbf{Filter Minimization (\fm)}}
{A deterministic filter $F$.}
{A deterministic filter $F^{\dagger}$ with fewest states, such that
$F^{\dagger}$ output simulates $F$.
}

We have also a filter reduction problem as follows.
\ourproblem{\textbf{$k$-Filter Reduction ($k$-\fm)}}
{A deterministic filter $F$.}
{A deterministic filter $F^{\dagger}$ with no more than $k$ states, such that
$F^{\dagger}$ output simulates $F$.
}

From now on, by filter we mean a deterministic filter, and we shall also assume $F$ has no unreachable states.

\section{Nonlinear integer programming formulation}
\label{sec:INP}
In our previous work~\citep{zhang20cover}, we expressed the two requirements on
the output, namely being both deterministic and output simulating of $F$, via
compatibility relationships which characterize all sets of states that can be
merged.  But since the set of compatibility relationships can be exponentially
large, they are time-inefficient to enumerate and space-inefficient to
represent explicitly.  Hence, in this section, we aim for concision:  instead
of enumerating compatibility relationships, we represent filters via vertex
covers. These covers can be encoded with a polynomial number of binary
variables, they allow for determinism and output simulation requirements to be
expressed, and they allow us to solve \fm as an instance of an integer nonlinear
program.

\subsection{The vertex cover representation of a filter}

%
To begin, define the basic combinatorial object involved:
\begin{definition}[vertex cover]
A \defemp{vertex cover} $\bm{K}=\{K_1, K_2, \dots, K_m\}$ on a filter $F$ is a
collection of subsets of vertices which cover all $F$'s vertices,
i.e., $K_i\subseteq V(F)$ for each $i$, and $\bigcup_{i=1}^{m} K_i =
V(F)$.
The size of $\bm{K}$ is number of the subsets, i.e., $|\bm{K}|=m$. 
\end{definition}


A vertex cover $\bm{K}=\{K_1, K_2, \dots, K_m\}$ on filter $F$ is \emph{\zipped} if for every subset $K_i \in \bm{K}$ and for each observation $y\in Y(F)$, there exists at least one subset $K_j \in \bm{K}$ that contains all $y$-children of the states within $K_i$.
Next, we show how a \zipped vertex cover begets a filter.

\begin{definition}[induced filter]
\label{def:inducedFilter}
Given a \zipped vertex cover $\bm{K}=\{K_1, K_2, \dots\}$ on $F=(V, \{ v_0 \},
Y, \tau, C, c)$, its induced filter $F^{\dagger}=(V^{\dagger}, \{\vdagger_0\}, Y, \tau^{\dagger}, C, c^{\dagger})$ is
constructed as follows:
\begin{enumerate}
\item Create a state $\vdagger_i$ in $F^{\dagger}$ for each non-empty subset $K_i$.
\item \label{def:indFilt_init} Select an arbitrary vertex $\vdagger_i$ as $\vdagger_0$, such that the corresponding
$K_i$ contains the initial state $v_0$ in $F$. 
\item \label{def:indFilt_trans} For each vertex $\vdagger_i$ and $y\in Y$, if $y$-children of vertices in $K_i$ is not empty, then add one transition from $\vdagger_i$ to $\vdagger_j$ under $y$ such that $K_j$ contains all $y$-children of $K_i$; if there are multiple such $\vdagger_j$s, pick an arbitrary one.  
\item Assign the output for $\vdagger_i$ to be 
$c^{\dagger}(\vdagger_i)\in \bigcap_{v\in K_i}c(v)$,
i.e., an output common to all vertices in $K_i$.
\end{enumerate}
\end{definition}

Note that each vertex in filter $F$
may be contained in multiple subsets in the vertex cover, and that each subset in the
cover is mapped to a \defemp{unique} vertex in the induced filter. Hence, we say
that each vertex in $F$ may be \defemp{mapped to} multiple vertices in the induced
filter. 
%
%

The construction does not shrink the filter's language.
\begin{lemma}
    \label{lem:langExpByCover}
    Let $F^{\dagger}$ be the induced filter for a \zipped vertex cover $\bm{K}$ on a filter $F$. It holds that $\Language{F} \subseteq \Language{F^{\dagger}}$.
\end{lemma}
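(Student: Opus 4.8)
The plan is to show $\Language{F}\subseteq\Language{F^\dagger}$ by taking an arbitrary string $s\in\Language{F}$ and proving that $s$ does not crash in $F^\dagger$, i.e., $\reachedf{F^\dagger}{s}\neq\emptyset$. Write $s=y_1y_2\cdots y_n$. Since $s\in\Language{F}$, there is a run $w_0,w_1,\dots,w_n$ in $F$ with $w_0=v_0$ and $y_i\in\tau(w_{i-1},w_i)$ for each $i$. The key is to track, alongside this run, a run in $F^\dagger$ on the \emph{same} string. I would do this by induction on the prefix length $t$, maintaining the invariant that after reading $y_1\cdots y_t$ the induced filter is in some state $\vdagger_{j_t}$ whose block $K_{j_t}$ contains \emph{every} state of $\reachedvf{v_0}{F}{y_1\cdots y_t}$ — in particular it contains $w_t$.

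First, the base case: $t=0$. By construction step~\ref{def:indFilt_init} of Definition~\ref{def:inducedFilter}, $\vdagger_0$ corresponds to a block $K_{j_0}$ containing $v_0$; and since $F$ is deterministic with the single initial state $v_0$, $\reachedvf{v_0}{F}{\epsilon}=\{v_0\}\subseteq K_{j_0}$, so the invariant holds. For the inductive step, assume after reading $y_1\cdots y_t$ we are at $\vdagger_{j_t}$ with $\reachedvf{v_0}{F}{y_1\cdots y_t}\subseteq K_{j_t}$. Since $w_t\in K_{j_t}$ and $y_{t+1}\in\tau(w_t,w_{t+1})$, the set of $y_{t+1}$-children of $K_{j_t}$ is non-empty, so by construction step~\ref{def:indFilt_trans} there is a transition $\vdagger_{j_t}\xrightarrow{y_{t+1}}\vdagger_{j_{t+1}}$ with $K_{j_{t+1}}$ containing \emph{all} $y_{t+1}$-children of $K_{j_t}$. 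It remains to check that $K_{j_{t+1}}$ contains $\reachedvf{v_0}{F}{y_1\cdots y_{t+1}}$; this follows because any state reached in $F$ by $y_1\cdots y_{t+1}$ is a $y_{t+1}$-child of some state reached by $y_1\cdots y_t$, which lies in $K_{j_t}$ by the inductive hypothesis, hence is a $y_{t+1}$-child of $K_{j_t}$, hence lies in $K_{j_{t+1}}$. This closes the induction, and taking $t=n$ shows $\vdagger_{j_n}\in\reachedf{F^\dagger}{s}$, so $s\in\Language{F^\dagger}$.

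The only subtlety — and the step I would be most careful about — is ensuring the transition invoked in the inductive step actually exists in $F^\dagger$: Definition~\ref{def:inducedFilter} only adds the $\vdagger_{j_t}\xrightarrow{y_{t+1}}$ transition when the $y_{t+1}$-children of $K_{j_t}$ are non-empty, and this is exactly where the \emph{\zipped} hypothesis is needed (to guarantee some block $K_{j_{t+1}}$ containing all those children exists, so that step~\ref{def:indFilt_trans} is well defined). The non-emptiness itself is handed to us for free by the witnessing run in $F$, since $w_t\in K_{j_t}$ transitions to $w_{t+1}$ on $y_{t+1}$. No other case analysis is required; determinism of $F^\dagger$ is not needed for this direction, only the existence of the tracked run.
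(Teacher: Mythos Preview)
Your proposal is correct and follows essentially the same approach as the paper's proof sketch: induction on the length of $s$, with the invariant that the state reached in $F^{\dagger}$ corresponds to a block $K_{j_t}$ containing the state reached in $F$. You supply the details the paper omits (in particular, why the $y_{t+1}$-transition exists in $F^{\dagger}$ and where the \zipped hypothesis enters), and your slightly stronger invariant that $K_{j_t}\supseteq\reachedvf{v_0}{F}{y_1\cdots y_t}$ is harmless since $F$ is deterministic.
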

\begin{sproof}
  This lemma can be proved by induction on the length of strings $s \in L(F)$ that shows if $s$ reaches a state $v$ in $F$, then $s$ reaches a state $\vdagger_i$ in $F^{\dagger}$ such that the corresponding $K_i$ contains $v$. This will show that if $s \in \Language{F}$, then $s \in \Language{F^{\dagger}}$, meaning that $\Language{F} \subseteq \Language{F^{\dagger}}$.
  %
\end{sproof}
%
The next throws light on why vertex covers interest us.

\begin{lemma}
\label{lm:vertexCoverForFM}
Given an input filter $F=(V, \{v_0\}, Y, \tau, C,
c)$, if there exists a solution to $k$-\fm, then there is always a filter $F^{\dagger}$ as a solution to $k$-\fm such that $F^{\dagger}$ is induced from a \zipped vertex cover on $F$.  
\end{lemma}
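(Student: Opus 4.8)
The plan is to start from any witness $G$ that solves $k$-\fm, i.e., a deterministic filter with at most $k$ states that output simulates $F$, and to read off from it a \zipped vertex cover on $F$ whose induced filter inherits the relevant properties of $G$. For each state $g \in V(G)$ define
\[
K_g \DefEq \{\, v \in V(F) : \exists\, s \in \Language{F} \text{ with } v \in \reachedf{F}{s} \text{ and } g \in \reachedf{G}{s} \,\},
\]
and set $\bm{K} = \{K_g : g \in V(G),\ K_g \neq \emptyset\}$. Since $F$ has no unreachable states, every $v \in V(F)$ is reached by some $s$, and because $G$ output simulates $F$ that same $s$ lies in $\Language{G}$ and hence reaches a state $g$ of $G$, unique by determinism of $G$; thus $v \in K_g$, so $\bm{K}$ is a vertex cover, and obviously $|\bm{K}| \le |V(G)| \le k$.

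The next step is to show $\bm{K}$ is \zipped. Fix $K_g \in \bm{K}$ and $y \in Y(F)$; if no state of $K_g$ has a $y$-child in $F$ the condition is vacuous, so let $w$ be a $y$-child of some $v \in K_g$ and pick an $s$ witnessing $v \in K_g$. Then $sy$ reaches $w$ in $F$, so $sy \in \Language{F} \subseteq \Language{G}$; since $\reachedf{G}{s} = \{g\}$ and $G$ is deterministic, $sy$ reaches the unique $y$-successor $g'$ of $g$ in $G$, and crucially $g'$ depends only on $g$ and $y$. Hence $w \in K_{g'}$, so the single set $K_{g'} \in \bm{K}$ contains every $y$-child of every state of $K_g$.

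It then remains to check that the induced filter $F^\dagger$ of $\bm{K}$ (Definition~\ref{def:inducedFilter}) is itself a solution to $k$-\fm, for which the only real content is output simulation; determinism and the bound $|V(F^\dagger)| = |\bm{K}| \le k$ are immediate from the construction. First, for any $g$ and any $v \in K_g$, taking a witness $s$ and using that $G$ output simulates $F$ gives $\emptyset \neq c(g) = \reachedc{G}{s} \subseteq \reachedc{F}{s} = c(v)$, where both outer equalities use that $F$ and $G$ are deterministic; hence $c(g) \subseteq \bigcap_{v \in K_g} c(v)$, so this intersection is non-empty and step~4 of the induced-filter construction is well defined. Second, by the induction already carried out for Lemma~\ref{lem:langExpByCover}, every $s \in \Language{F}$ reaches a unique state $v$ in $F$ and a unique state $\vdagger_i$ in $F^\dagger$ with $v \in K_i$; therefore $\reachedc{F^\dagger}{s} = c^\dagger(\vdagger_i) \subseteq \bigcap_{w \in K_i} c(w) \subseteq c(v) = \reachedc{F}{s}$, and in particular this set is non-empty. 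This establishes that $F^\dagger$ output simulates $F$ and finishes the proof.

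I expect the work here to be bookkeeping rather than conceptual. The hypotheses doing real work are that $G$ is deterministic (used to make the $y$-successor $g'$ well defined in the \zipped argument and to force $\bigcap_{v \in K_g} c(v)$ to be non-empty) and that $G$ output simulates $F$ (used for the color containment and to keep $\Language{F} \subseteq \Language{G}$); the output-simulation direction for $F^\dagger$ itself is then essentially free once it is routed through Lemma~\ref{lem:langExpByCover}'s induction. The main place to be careful is maintaining the correspondence $s \mapsto (\reachedf{F}{s}, \reachedf{G}{s})$ consistently across the three checks.
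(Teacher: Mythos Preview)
Your proof is correct and follows essentially the same approach as the paper: both define the cover via $K_g = \{v \in V(F) : \reaching{F}{v} \cap \reaching{G}{g} \neq \emptyset\}$ and then verify that the induced filter works. Your argument is in fact tidier in two places---you prove the \zipped property directly rather than by a vague contradiction, and you explicitly check that $\bigcap_{v \in K_g} c(v) \neq \emptyset$ so that step~4 of Definition~\ref{def:inducedFilter} is well defined, a point the paper glosses over.
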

\begin{proof}
Let $F^{\star}=(V^{\star}, \{v^{\star}_0\}, Y, \tau^{\star}, C, c^{\star})$ be any
solution to $k$-\fm with input $F$. From $F^{\star}$ and $F$, we construct a \zipped vertex cover $\bm{K}$ on $F$, then
construct an induced filter $F^{\dagger}$ from $\bm{K}$ and show that
$F^{\dagger}$ is also a solution for $k$-\fm.

We construct $\bm{K}$ as follows:
For each $i \in \lbrace 1, 2, \ldots, |V^{\star}| \rbrace$, we choose $v^{\star}_i \in V^{\star}$ and let $K_i = \lbrace v \in V \mid \reaching{F}{v} \cap \reaching{F^{\star}}{v^{\star}_i} \neq \emptyset  \rbrace$. Then we set $\bm{K}  = \lbrace K_1, K_2, \ldots, K_{|V^{\star}|} \rbrace$. Collection $\bm{K}$ is a vertex cover on $F$ because by assumption, $\Language{F} \subseteq \Language{F^{\star}}$, which implies that for each $v \in V$, there is at least one vertex $v^{\star} \in V^{\star}$ such that $\reaching{F}{v} \cap \reaching{F^{\star}}{v^{\star}} \neq \emptyset$, and this means that each vertex $v$ of $F$ is contained in at least one subset $K \in \bm{K}$. In addition, $\bm{K}$ must be \zipped. For otherwise, some string is in $F$ but not in $F^{\star}$, which contradicts with the fact that $F^{\star}$ output simulates $F$.

Now, we show that $F^{\dagger}$ is also a solution to $k$-\fm.
Trivially, $|\bm{K}|=|V^{\star}|\leq k$, and by the construction in Definition~\ref{def:inducedFilter},  $|V(F^{\dagger})|\leq |\bm{K}|$. Hence, $|V(F^{\dagger})|\leq k$.

We will prove by contradiction that $F^{\dagger}$ output simulates $F$.
Suppose $F^{\dagger}$ does not output simulate $F$. 
Then there must be a string
$s\in \Language{F}$, such that either ($i$) $s\not\in F^{\dagger}$, or ($ii$)
at least two states are reached by $s$ in $F^{\dagger}$ ($F^{\dagger}$ is non-deterministic), or ($iii$)
$\reachedc{F^{\dagger}}{s}\not\subseteq \reachedc{F}{s}$. 
Regarding case ($i$),
since $s\in \Language{F}$ and $\Language{F}\subseteq \Language{F^{\star}}$,
we have $s\in \Language{F^{\star}}$. 
Let $v^{\star}_j$ be a vertex reached by $s$ in $F^{\star}$. 
Then $s$ must
also reach $v^{\dagger}_j$ in $F^{\dagger}$, which indicates that $s\in F^{\dagger}$.
Regarding case ($ii$), let $v^{\dagger}_j$ and $v^{\dagger}_l$ ($j\neq l$) be two states in
$F^{\dagger}$ that are reached by $s$.
Notice that there is an injective function from the vertices and
edges in $F^{\dagger}$ to those in $F^{\star}$. 
Thus, $s$ must reach two different
vertices $v^{\star}_j$ and $v^{\star}_l$ in $F^{\star}$, which contradicts the fact that $F^{\star}$ is
deterministic. 
Regarding case $(iii)$, there must exist a vertex $v^{\dagger}_i$ in $F^{\dagger}$ and a
vertex $v$ in $F$ that are both reached by $s$ and that $v^{\dagger}_i$ and $v$ have different outputs. 
But according
to the construction of $F^{\dagger}$, $v^{\dagger}_i$ must share the same output as
$v$. Hence, $F^{\dagger}$ must output simulate $F$. 
\end{proof}
Hence, to solve \fm, we can always look for vertex covers.
%

\subsection{Searching over vertex covers via variables} 
Now we represent a vertex cover with binary variables.

To encode a vertex cover $\bm{K}=\lbrace K_1, K_2, \dots K_m\rbrace$ on an input filter
$F=(V, \{v_0\}, Y, \tau, C, c)$, we introduce the following binary variables:
\begin{itemize}
\item Create a binary variable $R^{i}_v$ for each $v\in V$ and each $i\in \lbrace 1,2,\dots,|V|\rbrace$, and assign $R^{i}_v=1$ if and only if $v$ is contained in $K_i$. If $i>|\bm{K}|$, then we view $K_i$ as an empty set and set $R^i_v=0$ for all $v\in V$.
\item Create a binary variable $q^i$ for each $i\in \lbrace 1,2,\dots, |V|\rbrace$, and assign $q^i=1$ if and only if $K_i$ is not empty. 
\end{itemize}

We also define additional variables with constant values assigned from the structure of the input filter:
\begin{itemize} 
\item Introduce a binary variable $t^y_v$ for each $v\in V$ and \mbox{$y\in Y$}, to which we assign value $1$ if and only if $v$ has non-empty $y$-children.
\item Introduce a binary variable $p^o_v$ for each $v\in V$ and \mbox{$o\in O$}, to which we assign value $1$ if and only if $v$ has $o$ in its outputs, i.e., $o\in c(v)$.
\end{itemize}
With these variables, we can encode an output filter and the constraints for it to be a valid solution in \fm. 
\subsection{\fm as an integer nonlinear program (\INP)}
Now, we formalize \fm as an integer nonlinear program.
%
In what follows, we
denote the input filter as $F=(V, \{v_0\}, Y, \tau, C, c)$, the vertex cover to be searched for as
$\bm{K}$, and the induced output filter from $\bm{K}$ as $F^{\dagger}=(V^{\dagger},
\{\vdagger_0\}, Y, \tau^{\dagger}, C, c^{\dagger})$. 
%
%
For brevity, we will simply write $\forall i$ for $\forall i\in \{1,2,\dots, |V|\}$, $\forall v$ for $\forall v\in V$, and $\forall y$ for $\forall y\in Y$. 

\begin{mdframed}
\small
	\noindent Minimize\vspace{-0.5em}
	\begin{equation}\label{eq-0}
		\sum_{1\leq j\leq |V|} q^j
		\tag{\INP-Obj}
	\end{equation}
	Subject to:\vspace{-0.5em}
    	\begin{equation}\label{eq:ilp_variables}
			q^i, R_v^i \in \lbrace 0, 1 \rbrace: \forall i, \forall v 
			\tag{\INP-Vars}					
		\end{equation}      			   			
		\begin{equation} \label{eq-1}
			R^i_v\leq q^i:  \forall  i, \forall v
			\tag{\INP-NESubset}		
		\end{equation}		    
		\begin{equation}\label{eq-5}
			q^i \leq q^{i-1}: \forall i 
			\tag{\INP-Sym}					
		\end{equation}     
		\begin{equation}\label{eq-2}
			\sum_{1 \leq j \leq |V|} R^j_{v_0}\geq 1 
			\tag{\INP-ValidCover}
		\end{equation}  
		\begin{equation*}\label{eq-3}
		\footnotesize
		\begin{aligned}
		{\sum_{1\leq j\leq |V|} \prod_{v\in V} (2- R^i_v - t^y_v +R^j_{v_y})\geq 1: \forall  i, \forall y}
		\end{aligned}
		\small
		\tag{{\INP-\Zip}}
		\end{equation*} 
		\begin{equation}\label{eq-4}
		\small
			\sum_{o\in C}\prod_{v\in V} (1-R^i_v+p^o_v)\geq 1:  \forall i 
			\tag{\INP-Out}
		\end{equation} 
\end{mdframed}

The objective~\eqref{eq-0} is to minimize the number of non-empty subsets in $\bm{K}$.
For each $j$, variable $q^j$ receives value $1$ if at least one vertex of $F$ is assigned to $K_j$. This is
expressed by constraints~\eqref{eq-1}. 
We use the idea of M{\'e}ndez-D{\'\i}az and Paula~\citep{mendez2008cutting} to reduce symmetry
by pushing the non-empty subsets to smaller indices. This is imposed by constraints~\eqref{eq-5}.

Constraint~\eqref{eq-2} requires that the initial state of $F$ be contained in at least one subset of the vertex cover.
Together with constraints~\eqref{eq-3}, this ensures that all vertices of $F$ that are reachable from the initial state will be covered by $\bm{K}$.
For the output filter to be deterministic, for each observation $y$ and each state $\vdagger_i$ in $F^{\dagger}$, $\vdagger_i$ must have at most a single $y$-child. Accordingly, for each subset $K_i$, there must exist a subset $K_j$ that contains all the $y$-children of $K_i$.
More exactly, $\forall i\in \lbrace 1,2,\dots, |V|\rbrace$, $\forall y\in Y$:
\begin{equation*}
    \small
	\begin{aligned}
		\label{eq-3-explaination}
		&\exists j\in \lbrace 1,2,\dots,|V|\rbrace, \text{s.t.}, \underbrace{\forall v\in V, \left((R_v^i t_v^y =1)  \implies (R^j_{v_y}=1)\right)}_{\text{\footnotesize all $y$-children of $K_i$ are contained in $K_j$}}. \\
	\end{aligned}
\end{equation*}
In algebraically simplified form, this is the zipped constraints~\eqref{eq-3}.
Note that we allow multiple such $K_j$'s to exist for any $K_i$ and $y$, but in the output filter, only one will be (arbitrarily) picked for the transition.

In addition, the output for vertex $\vdagger_i$ should be the common output of all
vertices in the subset $K_i$. This means that for each subset $K_i$, all states
within that subset must share a common output. Formally, $\forall i\in \lbrace 1,2,\dots, |V|\rbrace$, 
\begin{equation*}
\small
	\begin{aligned}
		\label{eq-4-explaination}
		\exists o \in C, s.t., \underbrace{\forall v\in V, \left((R_v^i = 1)
\implies (p_v^o = 1)\right)}_{\text{\footnotesize all vertices in $K_i$ must share output $o$}},\\ 
	\end{aligned}
\end{equation*}
which is expressed by constraints~\eqref{eq-4}.

With a solution to the 
integer nonlinear program in hand,
we first form the vertex cover $\bm{K}$ by constructing the subsets according
to the values assigned to variables $R_v^i$'s. 
Then we make the output filter $F^{\dagger}$ by following Definition~\ref{def:inducedFilter}.

The next we prove the correctness of \INP.
%



\begin{lemma}[correctness]
    \label{lem:optimalSolINP}
    Let $\bm{K}$ be the vertex cover formed by an optimal solution to the
    integer nonlinear program for an input filter $F$ and let $F^{\dagger}$ be an
    induced filter from $\bm{K}$. Filter $F^{\dagger}$ is an optimal solution
    to \fm with input $F$.
\end{lemma}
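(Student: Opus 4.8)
The strategy is to establish a two-way correspondence between feasible solutions of the integer nonlinear program and \zipped vertex covers on $F$ whose induced filters solve $k$-\fm, and then lift this to optimality. First I would argue \emph{soundness}: given an optimal solution to \INP, the set system $\bm{K}=\{K_1,K_2,\dots\}$ built from the $R_v^i$ values (with $K_i=\{v : R_v^i=1\}$) is a genuine vertex cover. Constraint~\eqref{eq-2} together with \eqref{eq-3} forces every state reachable from $v_0$ to lie in some $K_i$; since we assumed $F$ has no unreachable states, $\bigcup_i K_i = V(F)$. Constraint~\eqref{eq-3} (the \INP-\Zip\ constraint), after unpacking the product-of-$(2-R_v^i-t_v^y+R_{v_y}^j)$ encoding, says exactly that for each $i$ and each $y$ there is a $j$ with all $y$-children of $K_i$ contained in $K_j$; that is precisely the \zipped condition. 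Constraint~\eqref{eq-4} guarantees each $K_i$ has a common output, so step~4 of Definition~\ref{def:inducedFilter} is well-defined. Hence $F^{\dagger}$ is a legitimately induced filter. By Lemma~\ref{lem:langExpByCover}, $\Language{F}\subseteq\Language{F^{\dagger}}$, and by the construction of $\tau^{\dagger}$ and $c^{\dagger}$ one checks determinism (each $\vdagger_i$ has at most one $y$-transition, by step~3) and output containment (each reached $\vdagger_i$ carries an output common to its $K_i$, hence in $c(v)$ for the $v\in K_i$ reached by the same string — this needs the same inductive tracking as in Lemma~\ref{lem:langExpByCover}). Together these give that $F^{\dagger}$ output simulates $F$, so $F^{\dagger}$ is \emph{some} solution to $k$-\fm with $k=\sum_j q^j$.

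Second I would argue \emph{optimality / completeness}: suppose for contradiction that there were a smaller filter $F'$ output-simulating $F$, with $|V(F')| = k' < \sum_j q^j$. By Lemma~\ref{lm:vertexCoverForFM}, we may assume $F'$ is induced from a \zipped vertex cover $\bm{K}'$ on $F$ with $|\bm{K}'| = k'$. I then encode $\bm{K}'$ back into variable assignments: set $R_v^i=1$ iff $v\in K_i'$, $q^i=1$ iff $K_i'\neq\emptyset$, pushing non-empty subsets to the low indices so that \eqref{eq-5} holds, and pad with empty $K_i$ for $i>k'$. One verifies this assignment satisfies \eqref{eq:ilp_variables}, \eqref{eq-1}, \eqref{eq-5}, \eqref{eq-2} (as $v_0$ lies in some subset of the cover), \eqref{eq-3} (the \zipped property of $\bm{K}'$), and \eqref{eq-4} (the common-output property, which holds because $F'$ is induced and output-simulates $F$ — each $K_i'$ must be output-homogeneous). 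This is a feasible point with objective $k' < \sum_j q^j$, contradicting optimality of the original solution. Therefore no such $F'$ exists and $F^{\dagger}$ achieves the minimum.

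The main obstacle I expect is the careful bookkeeping in the soundness direction: verifying that the \emph{algebraic} form of \eqref{eq-3} and \eqref{eq-4} (products over $v\in V$ of linear terms, summed over $j$ or $o$, required $\geq 1$) really is logically equivalent to the $\exists j\,\forall v$ (resp.\ $\exists o\,\forall v$) implications spelled out in the "explanation" displays — each factor $2-R_v^i-t_v^y+R_{v_y}^j$ is $\geq 1$ always and equals $0$ exactly when $R_v^i=t_v^y=1$ and $R_{v_y}^j=0$, so the product is $\geq 1$ iff no such $v$ exists, and the outer sum being $\geq 1$ iff some $j$ works; similarly for \eqref{eq-4}. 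The rest — that an induced filter output-simulates its generating cover, and the language-inclusion and output-inclusion arguments — is essentially a restatement of Lemma~\ref{lem:langExpByCover} and the relevant parts of the proof of Lemma~\ref{lm:vertexCoverForFM} (the case analysis on $(i)$, $(ii)$, $(iii)$ there), so I would invoke those rather than redo the induction. Finally, $|V(F^{\dagger})| \le |\bm{K}| = \sum_j q^j$ with equality achievable, closing the gap between "objective value" and "number of states."
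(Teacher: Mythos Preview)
Your proposal is correct and follows essentially the same two-direction argument as the paper's proof sketch: feasibility of the \INP\ constraints corresponds exactly to $\bm{K}$ being a \zipped, output-consistent cover (so the induced $F^{\dagger}$ output simulates $F$, via Lemma~\ref{lem:langExpByCover}), and optimality is handled via Lemma~\ref{lm:vertexCoverForFM} by encoding any strictly smaller solution back into a feasible point with smaller objective. One small slip in your algebraic unpacking: you write that each factor $2-R_v^i-t_v^y+R_{v_y}^j$ is ``$\geq 1$ always,'' but you mean $\geq 0$ always (it equals $0$ precisely in the bad case you identify, which is what makes the product-vanishing argument work); the remainder of the reasoning is unaffected.
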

\begin{sproof}
The constraints from the nonlinear programming formalize \emph{exactly} the induced filter $F^{\dagger}$ to be deterministic and output simulate the input filter $F$. And the objective function characterizes \emph{exactly} that $F^{\dagger}$ has the minimum number of states.
To show this, we establish the equivalence between the nonlinear constraints and properties of determinism and output simulating in \fm. This holds in both directions.

$\impliedby$: If constraints~\eqref{eq-2} and~\eqref{eq-3} hold, then $\bm{K}$ is a \zipped vertex cover, $F^{\dagger}$ constructed following Definition~\ref{def:inducedFilter} is deterministic, and $\Language{F}\subseteq\Language{F^{\dagger}}$ as per Lemma~\ref{lem:langExpByCover}. If constraint~\eqref{eq-4} is satisfied, then $\forall s\in \Language{F}$, $\reachedc{F}{s}\supseteq \reachedc{F^{\dagger}}{s}$. Hence, $F^{\dagger}$ is deterministic and output simulates $F$. 

$\implies:$ Given an $F^{\dagger}$ that is an optimal solution for \fm, construct an induced \zipped cover $\bm{K}$ following Lemma~\ref{lm:vertexCoverForFM}. The values of the variables encoding this cover must satisfy constraints~\eqref{eq-1} and~\eqref{eq-2}. If $F^{\dagger}$ is deterministic, then constraints~\eqref{eq-3} must also be satisfied. The fact that $F^{\dagger}$ output simulates $F$ implies that constraints~\eqref{eq-4} are satisfied.

Proof that if $F^{\dagger}$ is minimal, then the value of \eqref{eq-0} must be optimal (and vice versa) follows similarly. 
\end{sproof}

\section{Integer linear programming and SAT}
\label{sec:ILP_SAT}
In this section, we introduce three additional formulations of \fm: ($i$)
an integer linear programming formulation, ($ii$) a Boolean satisfaction formulation, and
($iii$) a Boolean satisfaction with \zipped constraints being added just-in-time.

\subsection{Integer linear programming (\IP) with linear constraints}
This section presents an integer linear program by linearizing the nonlinear constraints~\eqref{eq-3} and~\eqref{eq-4}.

To linearize constraints~\eqref{eq-3}, we introduce a binary variable
$a^{i,j}_{y}$ for each $i, j \in \lbrace 1,2, \dots,|V|\rbrace$ and $v \in V$ to determine whether there is a transition from vertex $\vdagger_i$ to vertex $\vdagger_j$ under label $y$ in the output filter $F^{\dagger}$. If $a^{i,j}_y=1$, then the value of term $\prod_{v\in V} (1-R^i_v+1-t^y_v+R^j_{v_y})$ must be a positive integer. Otherwise, we choose not to build such a transition in the output filter, regardless of the value of the corresponding term. Mathematically, we have: 
\begin{equation}
    \small
	\begin{aligned}
		\label{eq-3-1-ip}
		{a^{i,j}_y+R^i_v+t^y_v-R^j_{v_y}\leq 2:  \forall i, \forall j, \forall v, \forall y}.
	\end{aligned}
	\tag{\IP-\Zip-1}
\end{equation}

Then constraints~\eqref{eq-3} are written as 
\begin{equation}
    \small
	\begin{aligned}
		\label{eq-3-2-ip}
		\sum_{j=1}^{|V|} a^{i,j}_y\geq 1: \forall i, \forall y.
	\end{aligned}
	\tag{\IP-\Zip-2}
\end{equation}

For constraints~\eqref{eq-4}, we similarly introduce a binary variable $b^i_o$, with value $1$ to denote the fact that the term $\prod_{v\in V} (1-R_v^i+p^o_v)$ has a positive value. If $b^i_o=0$, then we do not care whether the value of the corresponding term is positive or not. 
Thus, we add the following constraints:
\begin{equation}
    \small
	\begin{aligned}
		\label{eq-4-1-ip}
		1-b^i_o+1-R^i_v+p^o_v\geq 1: \forall i, \forall o, \forall v.
	\end{aligned}
	\tag{\IP-Out-1}
\end{equation}

Then constraints~\eqref{eq-4} are linearized as follows:
\begin{equation}
    \small
	\begin{aligned}
		\label{eq-4-2-ip}
		\sum_{o\in C} b^i_o\geq 1: \forall i.
	\end{aligned}
	\tag{\IP-Out-2}
\end{equation}

%
%
%

\subsection{Boolean satisfaction (\SAT)}
We next treat \fm as a sequence of $k$-\fm problems by enumerating the bound on the output filter size.
Each $k$-\fm is formalized as a Boolean satisfaction problem, which we call $\SAT_{[k]}$. To find the minimal
filter, the idea is to solve a $\SAT_{[k]}$, and then decrement $k$ until no smaller output filter can be found.

To obtain a $\SAT_{[k]}$ instance, we first remove variables $q^i$ ($\forall i$) and constraints~\eqref{eq-1} and~\eqref{eq-5} since we do not need~\eqref{eq-0}
and only want to find an output filter with a size bounded by $k$. Next, we treat binary variables $R^i_v$, $a^{i,j}_y$,
$b^i_{o}$ as boolean-valued and write constraints~\eqref{eq-3-1-ip}--\eqref{eq-4-2-ip}
in conjunctive normal form (CNF). 

Given a filter minimization problem with size bounded by~$k$,
constraint~\eqref{eq-2}
is written as a clause: 
\begin{equation}
\small
\label{eq-2-sat}
\blor_{i\in \lbrace 1,2,\dots, k\rbrace} R^i_{v_0}.
\tag{\SAT-ValidCover}
\end{equation}

Constraints~\eqref{eq-3-1-ip} and~\eqref{eq-3-2-ip} are written as the following
clauses: 
\vspace*{-4pt}
\begin{equation}
\small
\label{eq-3-1-sat}
\begin{aligned}
\negation{a^{i,j}_y}\lor \negation{R^i_v}\lor \negation{t^y_v}\lor R^j_{v_y}: \forall i, \forall j, \forall v, \forall y
\end{aligned}
\tag{\SAT-\Zip-1}
\end{equation}
\vspace*{-10pt}
\begin{equation}
\small
\label{eq-3-2-sat}
\begin{aligned}
\blor_{j \in \lbrace 1,2,\dots, k\rbrace}a^{i,j}_y: \forall i, \forall y.
\end{aligned}
\tag{\SAT-\Zip-2}
\end{equation}
\vspace*{-4pt}

And constraints~\eqref{eq-4-1-ip} and~\eqref{eq-4-2-ip} become:
\begin{equation}
\small
\label{eq-4-1-sat}
\begin{aligned}
\negation{b^{i}_o}\lor \negation{R^i_v}\lor p^o_v: \forall i, \forall o, \forall v,
\end{aligned}
\tag{\SAT-Out-1}
\end{equation}
\begin{equation}
\small
\label{eq-4-2-sat}
\begin{aligned}
\blor_{o\in C}b^{i}_o: \forall i.
\end{aligned}
\tag{\SAT-Out-2}
\end{equation}

Notice that consecutive \SAT instances share most of their variables
and constraints. 
The $\SAT_{[k]}$ instance is equivalent to the $\SAT_{[k+1]}$ one but with
additional unit clauses $\negation{R^{k+1}_{v}}$ for all $v\in V$. 
%
%
Instead of making each
$\SAT_{[k]}$ instance from scratch and solving it, we add unit clauses while 
decreasing $k$.
This allows the solver to re-use knowledge acquired from previous \SAT
instances, and leads to an incremental anytime procedure in
Algorithm~\ref{alg:incrementSAT}.
First, we initialize $k$ to be $|V(F)|$ (line~$1$). 
Next, we construct a CNF formula for $\SAT_{[k]}$ and invoke the SAT solver (lines~$2$--$6$). 
If an assignment is found for $\SAT_{[k]}$ within the time budget (line~$7$), then we set its cover choice to be the smallest one found so far (line~$8$), update the time
budget by the amount of time used in this iteration (line~$9$), add the unit clauses (line~$10$), and decrease $k$ (line~$11$). Otherwise, if the $\SAT_{[k]}$ has not been solved within the time budget, then we use
the minimum cover found so far to construct the minimal filter (line~$16$). 
When given an adequate time budget, the algorithm will find the minimal filter.
%
And running the algorithm for a longer duration increases the chance of finding a smaller filter.

\setlength{\textfloatsep}{0pt}
\begin{algorithm}[ht]
{\small
\caption{$\mathrm{\SAT}(F, timeout)$}
\label{alg:incrementSAT}
\begin{algorithmic}[1] 
\STATE $k\gets |V(F)|$
\STATE $\operatorname{CNF}\gets \operatorname{BuildFormula}(F, k)$ 
\STATE Initialize minimum vertex cover $\bm{K}_{min}$  to be empty
\STATE $solver\gets\operatorname{SATSolver(CNF)}$
\WHILE{$k\geq 1$ and $timeout>0$}
	\STATE $result\gets solver.\operatorname{solve}(timeout)$
	\IF{$result.solved$}
		\STATE $\bm{K}_{min}\gets result.model$	
		\STATE Reduce $timeout$ by the time used
		\STATE Add unit clauses $\negation{R^k_v}$ ($\forall v\in V$) to $solver$
		\STATE $k\gets k-1$	
	\ELSE
		\STATE \textbf{break}
	\ENDIF
\ENDWHILE
\STATE $F'\gets \operatorname{FilterConstruction}(F, \bm{K}_{min})$
\RETURN $F'$
\end{algorithmic}
}
\end{algorithm}
%
%
\subsection{\SAT with just-in-time constraints: \LazySAT}
\label{sec:lazy}
In \SAT, \zipped constraints~\eqref{eq-3-1-sat} and~\eqref{eq-3-2-sat} are critical to ensure
deterministic transitions between subsets of $\bm{K}$. 
If a set of vertices in the input filter do not share any common output, then there is no need to check  these \zipped constraints on any set containing these vertices. In this case, we say that the zipped constraints related to these vertices are inactive. The existence of inactive constraints slows down the resolution of the \SAT problem, but detecting and representing all active \zipped constraints in \fm requires exponential time and space~\citep{zhang20cover}. To speed up the \SAT approach without significant overhead, we introduce \LazySAT, a just-in-time treatment of the \zipped constraints. In \LazySAT, we first partition these constraints into non-overlapping sets of clauses, solve the \SAT problem without these constraints, and introduce each set of clauses only when a non-\zipped cover is returned and it violates these clauses.

\Zipped constraints ensure that $\bm{K}$ covers $F$ as well as being \zipped.
%
To treat them lazily, we update constraints~\eqref{eq-2-sat} so that every state in $F$ is contained in at least a subset:
\begin{equation}
\small
\begin{aligned}
\bland_{v\in V} \bigg(\blor_{i\in \lbrace 1,2,\dots, k\rbrace} R^i_v\bigg).
\end{aligned}
\tag{\LazySAT-ValidCover}
\end{equation}

Next, we partition the clauses in the \zipped constraints. Let the set of clauses from constraint~\eqref{eq-3-1-sat} be $\mathbb{A}$.
Then $\mathbb{A}$ can be  partitioned into
non-overlapping subsets according to the vertex $v$ and outgoing label $y$, i.e.,
$\mathbb{A}=\cup_{v\in V}\cup_{y} \mathbb{A}^v_y$. Each subset $\mathbb{A}^v_y$ consists of the
following clauses: 
\begin{equation*}
\small
\bland_{i\in \lbrace 1,2,\dots, k\rbrace}\bland_{j\in \lbrace 1,2,\dots, k\rbrace} \big(\negation{a^{i,j}_y}\lor
\negation{R^i_v}\lor \negation{t^y_v}\lor R^j_{v_y}\big).
\end{equation*}
Similarly, the set of clauses from constraint~\eqref{eq-3-2-sat} is denoted as
$\mathbb{B}$, which is parameterized by the outgoing label $y$. Each subset
$\mathbb{B}_y$ consists of the following clauses: 
\begin{equation*}
\small
\bland_{i\in \lbrace 1,2,\dots, k\rbrace}\bigg(\blor_{j\in \lbrace 1,2,\dots, k\rbrace} a^{i,j}_y\bigg).
\end{equation*}

We detect the violation of these clauses, and add the clauses to the solver as needed. Let $Y_c$ be the set
of outgoing labels $y$ such that the clauses in $\mathbb{B}_y$ are already
present in the solver, and $P$ be the set of vertex $v$ and outgoing
label $y$ pairs such that $\mathbb{A}^v_y$ are also added to the solver. Both $Y$ and $P$ are initialized as empty sets. Given a vertex cover $\bm{K}$ returned from the \SAT algorithm, if $\bm{K}$ is not \zipped, then there must exist a set of states $K\in \bm{K}$ and a label $y$ such that all $y$-children of vertices in $K$ are not contained in any single subset in the cover $\bm{K}$. This must be a consequence of violating some missing clauses parameterized by $K$ and $y$ in the \zipped constraints. 
We add these clauses as follows: ($i$) if $y\not\in Y_c$, add $y$ to $Y_c$ and add clauses in
$\mathbb{B}_y$ to the solver; ($ii$) for any $v\in K$, if $(v,y)\not\in P$, add $(v,y)$ to $P$ and add clauses in $\mathbb{A}^v_y$ to the solver. Now, repeatedly call the solver, adding clauses if needed, until
we find a \zipped vertex cover with size no greater than $k$. To find a minimum solution, 
follow the same procedure as Algorithm~\ref{alg:incrementSAT}. 

\section{Experimental results}
\label{sec:expr}
We implemented \INP, \IP, \SAT and \LazySAT in Python, based on mixed integer
nonlinear solver SCIP~\citep{GamrathEtal2020ZR}, mixed integer linear solver Gurobi~\citep{gurobi}, and SAT solver
CaDiCaL~\citep{cadical2020}. All executions are conducted on an OSX laptop with a \SI{2.4}{\GHz} Intel Core
i5 processor, and each algorithm is given \SI{10}{\minute} budget to solve a filter minimization problem.

First, we minimize the filter in Figure~\ref{fig:motivating_input}.
\INP failed to give a result before timing out, while \IP, \SAT and \LazySAT give minimal filters with $5$ states within \SI{1}{\second}, \SI{2}{\second} and \SI{100}{\second}, respectively. One such minimal filter is shown in Figure~\ref{fig:motivating_minimal}. 
We collected no further results for \INP since it appears to be incapable of minimizing filters with more than $10$ states within \SI{10}{\minute}.

To test the performance of the remaining three algorithms, we randomly generated a filter as follows: ($i$)~construct a tree with a root node at layer $0$ and $w$ nodes at each of $d$ additional layers, and then connect each vertex from a parent vertex in an earlier layer by drawing a directed edge; ($ii$)~randomly pick $m$ vertices to add self loops; ($iii$)~randomly pick $n$ vertices to connect to some parent vertex in a later layer, so as to generate cycles. Next we randomly assign $n_o$ outputs to vertices in the filter, where each vertex is assigned with $p$ of them. Similarly, we randomly assign $n_y$ observations to the edges in the filter while keeping the filter deterministic. 

To compare \IP and \SAT-based approaches, we start with a 
filter structure randomly generated by parameters $d$=$4$, $w$=$3$, $m$=$n$=$2$, $p$=$2$ and $n_o$=$5$. For any given number of observations $n_y$, we sample $10$ filters, and collect the time to minimize these filters for each algorithm in Figure~\ref{fig:exp_obs}. As more observations are added to the filter, fewer states share common observations. The \zipped constraints~\eqref{eq-3-1-ip} and~\eqref{eq-3-1-sat} will be simplified, since $a^{i,j}_y$ connects with fewer vertices. Hence, the computational time for both \IP and \SAT-based approaches tend to decrease. Fixing the number of observations to be $n_y=5$, we also collect the computation time under varying outputs in Figure~\ref{fig:exp_output}. 
This gives an opposite trend as increasing the number of outputs makes the problem harder from two aspects: ($i$) the number of variables increases; ($ii$) the number of output constraints~\eqref{eq-4-1-ip} or~\eqref{eq-4-1-sat} increases owing to both an increasing number of outputs and an increasing number of vertices with $p^o_v=0$ for each output $o$.
%
Across both studies, \SAT-based approaches outperform integer linear programming. 
We speculate that this is because the constraints for \fm are fundamentally combinatorial in nature and can be concisely encoded in CNF. 
These CNF constraints can be exploited relatively efficiently (e.g., by building a constraint-dependency graph). And a final factor might
be that the objective function really takes a limited range of values and its values can be enumerated efficiently.

\begin{figure}
\begin{subfigure}[b]{0.49\linewidth}
\includegraphics[scale=0.41]{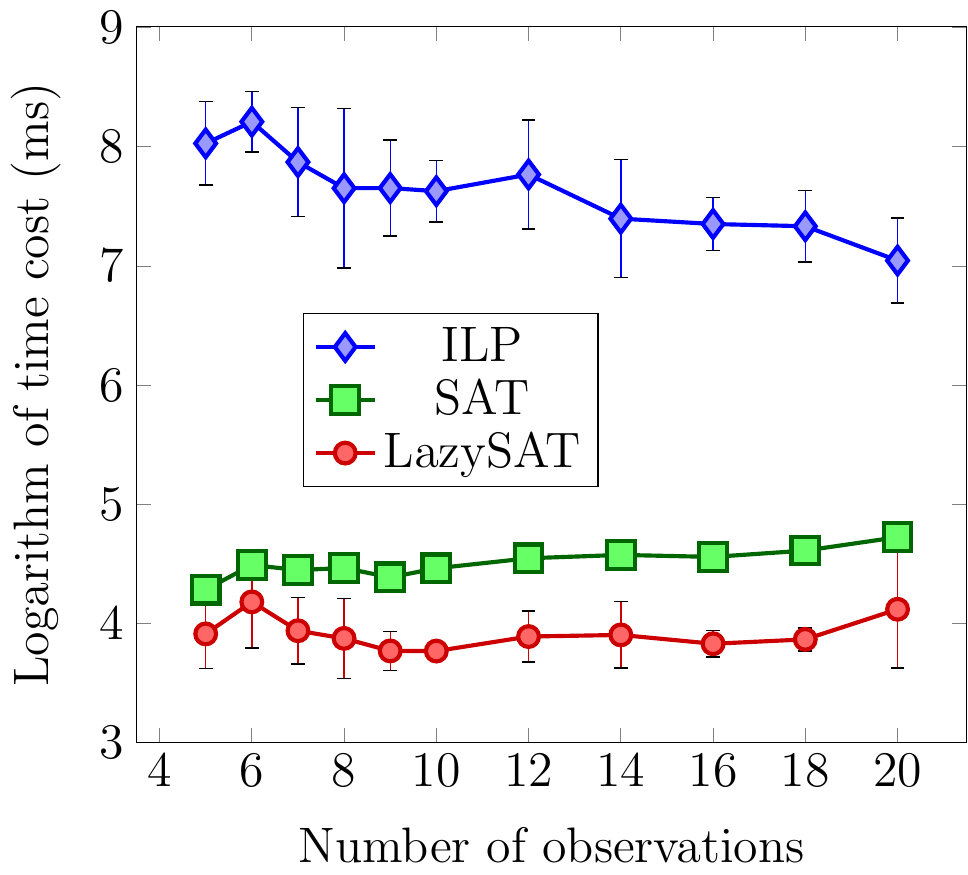}
\caption{Time cost (natural log) for inputs with varying observations.\label{fig:exp_obs}}
\end{subfigure}
\begin{subfigure}[b]{0.49\linewidth}
\includegraphics[scale=0.41]{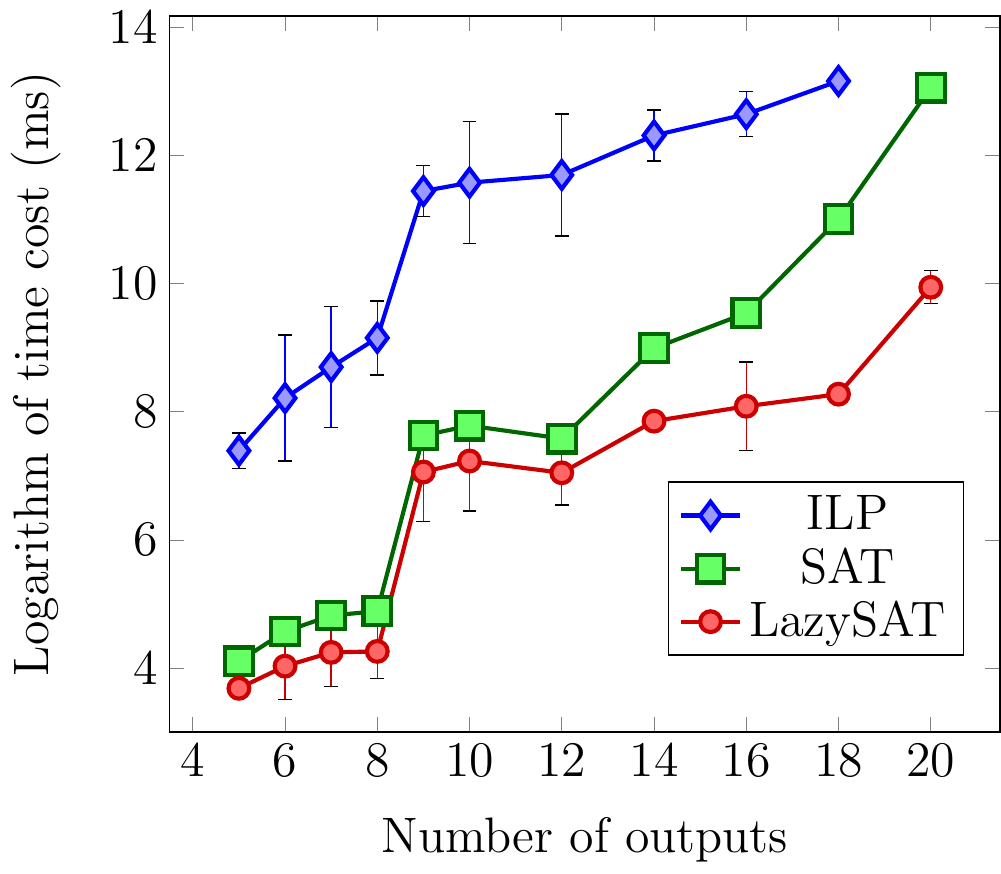}
\caption{Time cost (natural log) for inputs with varying outputs.\label{fig:exp_output}}
\end{subfigure}
\caption{Comparison of logarithmic computational time to minimize filters with different number of outputs and observations.}
\vspace{-0.4cm}
\end{figure}
\begin{figure}
\centering
\includegraphics[scale=0.56]{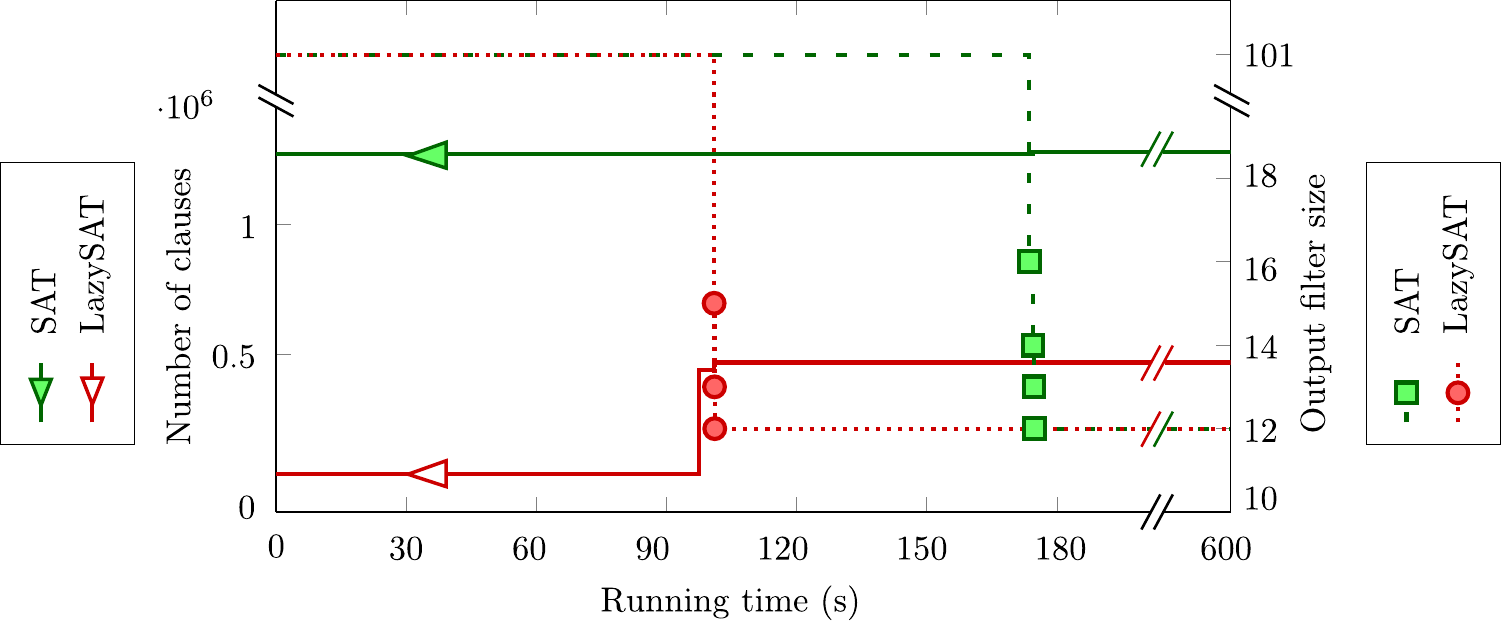}
\caption{The number of constraints used by \SAT and \LazySAT to find the sub-optimal solutions while increasing the running time. The input filter is constructed with parameters $d=20$, $w=5$, $m=n=10$, $p=1$, with $50$ observations and $5$ outputs.
\label{fig:exp_large}}
\end{figure}

Observe that in Figure~\ref{fig:exp_output}, as we increase the number of outputs, \LazySAT significantly outperforms \SAT since few states share common outputs, so most \zipped constraints are inactive and can be removed. 
We further tested them on a larger filter instance, where many
states share common outputs and hence a significant proportion of constraints
become active.
In Figure~\ref{fig:exp_large}, instead of presenting the time to find a minimal solution, we report the number of clauses used by the solver, and size of the sub-optimal solutions found by the two algorithms along the way. 
\LazySAT is still able to find sub-optimal solutions faster than \SAT, and the number of clauses used by \LazySAT is much fewer than those in \SAT. Treating constraints lazily does incur overhead in detecting and adding the active clauses, but the speedup from just-in-time treatment is seen to outweigh its overhead even when a large number of vertices share common outputs.

\section{Conclusion}
\label{sec:conc}
This paper accelerates filter minimization through constraints. It introduces a concise constraint description, encodes it in different forms, and reports empirical evidence suggesting that constraints in conjunctive normal form are most efficient. It also proposes a just-in-time treatment of constraints to speed up iterative filter reduction. Future work might consider  non-deterministic input, as well as searching for non-deterministic minimizers.

\bibliographystyle{IEEEtran}
\bibliography{mybib}
\end{document}